\documentclass[11pt]{article}

\usepackage[preprint]{acl}

\usepackage{times}
\usepackage{latexsym}

\usepackage[T1]{fontenc}
\usepackage[utf8]{inputenc}
\usepackage{microtype}
\usepackage{inconsolata}

\definecolor{codegreen}{rgb}{0,0.6,0}
\definecolor{codegray}{rgb}{0.5,0.5,0.5}
\definecolor{codepurple}{rgb}{0.58,0,0.82}
\definecolor{backcolour}{rgb}{0.95,0.95,0.92}
\colorlet{reddish}{red!70}

\usepackage{enumitem}
\usepackage{booktabs}
\usepackage{multirow}
\usepackage{graphicx}
\usepackage{float}
\usepackage{amsmath,amssymb}
\usepackage{amsthm}
\usepackage{hyperref}
\usepackage{tcolorbox}
\usepackage{pifont}
\newcommand{\cmark}{\ding{51}}
\newcommand{\xmark}{\ding{55}}

\newtheorem{theorem}{Theorem}[section]
\newtheorem{lemma}[theorem]{Lemma}
\newtheorem{corollary}[theorem]{Corollary}

\newtheorem{assumption}{Assumption}

\newtheorem{remark}{Remark}
\DeclareMathOperator*{\argmax}{arg\,max}

\title{More Test-Time Compute Can Hurt: \\ Overestimation Bias in LLM Beam Search}

\author{
  \textbf{Gal Dalal\textsuperscript{1}\thanks{Equal contribution.}},
  \textbf{Assaf Hallak\textsuperscript{1}\footnotemark[1]},
  \textbf{Gal Chechik\textsuperscript{1,2}},
  \textbf{Yftah Ziser\textsuperscript{1,3}}
\\\\
  \textsuperscript{1}NVIDIA Research,
  \textsuperscript{2}Bar-Ilan University,
  \textsuperscript{3}University of Groningen
}

\begin{document}
\maketitle
\begin{abstract}
Wider beam search should improve LLM reasoning, but when should you stop widening? Prior work on beam width selection has focused on inference efficiency \citep{qin2025dsbd, freitag2017beam}, without analyzing whether wider search can \emph{hurt} output quality. We present an analysis, grounded in Extreme Value Theory, that answers this question. Beam selection over noisy scorer outputs introduces a systematic overestimation bias that grows with the candidate pool size, and we derive a maximum useful beam width $\hat{k}$ beyond which search degrades performance. This critical width depends on the signal-to-noise ratio of the scorer: $\hat{k}$ grows exponentially with $(\Delta/\sigma)^2$, where $\Delta > 0$ is the quality advantage of correct paths over incorrect ones and $\sigma$ is the scorer noise. We validate this theory by comparing perplexity-guided and PRM-guided beam search across three 7B-parameter models and ten domains on MR-BEN (5,975 questions). Perplexity scoring, with its high noise, yields $\hat{k} = 1$: search provides no benefit at any width tested. PRM scoring, with lower noise, yields $\hat{k} \geq 4$, with gains of up to 8.9 percentage points. The same model, the same algorithm, but different scorers place $\hat{k}$ at opposite ends of the beam width range. Our analysis identifies the scorer's signal-to-noise ratio as the key quantity governing beam width selection, and we propose diagnostic indicators for choosing the beam width in practice.
\end{abstract}

\begin{figure*}[t]
    \centering
    \includegraphics[width=\linewidth]{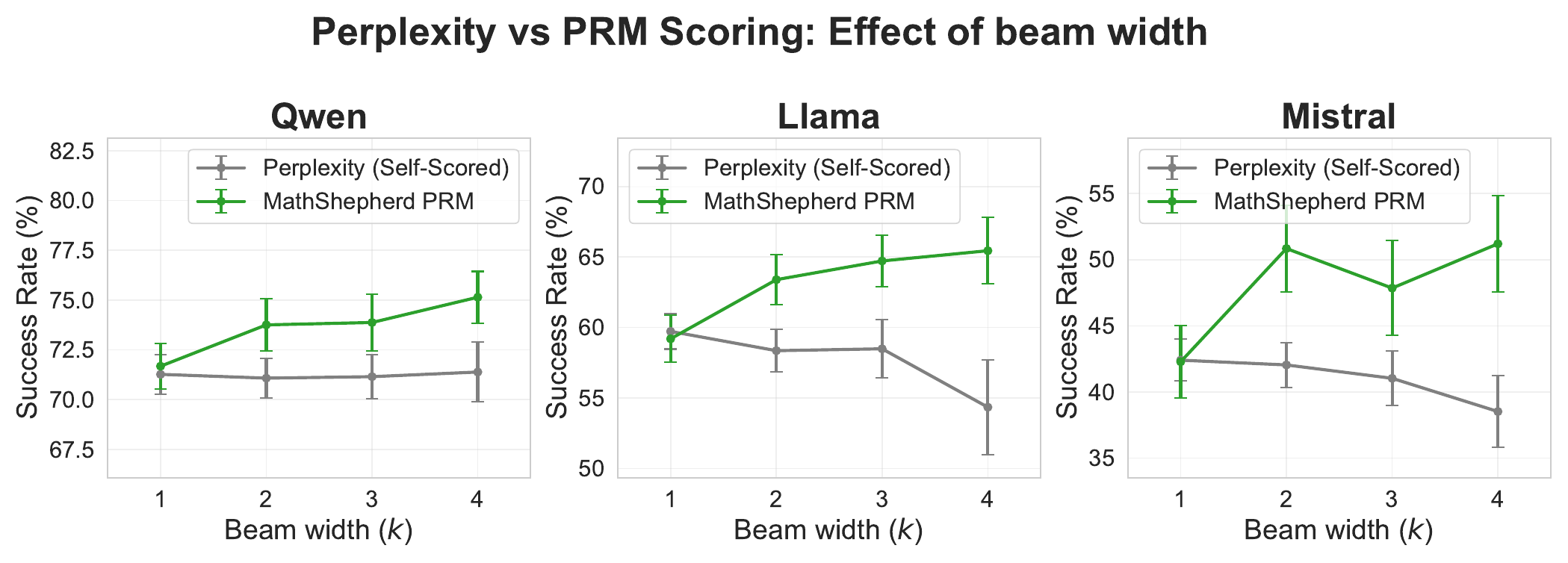}
    \caption{Per-model comparison of perplexity (gray) vs.\ PRM (green) scoring across beam widths. The two scorers place the predicted maximum useful beam width $\hat{k}$ at opposite ends of the range: perplexity curves are flat ($\hat{k} = 1$), while PRM curves rise through beam width 4 ($\hat{k} \geq 4$), illustrating how scorer quality determines the optimal beam width.}
    \label{fig:scorer_comparison}
\end{figure*}

\section{Introduction}

Large Language Models (LLMs) have demonstrated remarkable progress on complex reasoning tasks, in large part due to techniques that elicit and leverage step-by-step solutions \citep{wei2022cot}. A major recent thread is \emph{process supervision}: instead of judging only the final answer, a model (or learned verifier) provides feedback on intermediate steps, enabling search and credit assignment over reasoning traces. In structured domains such as mathematics, Process Reward Models (PRMs) that score each step have proven particularly effective \citep{lightman2023prm, cui2025prime}.

At the same time, there is growing interest in \emph{test-time compute scaling}---allocating additional inference-time computation to improve accuracy via sampling and search \citep{snell2024scaling}. Common instantiations include best-of-$n$ sampling, reranking, and beam search over partial chains of thought. A pervasive implicit assumption is that \emph{any} reasonable scoring signal can drive such search: generate more candidates, score them, and pick the best. This assumption has not been rigorously examined.

A natural candidate for a cheap scoring signal is \emph{perplexity}, a model's measure of predictive confidence. Several intrinsic signals have been proposed for self-evaluation, from entropy-based self-certainty \citep{kang2024selfcertainty} to Markovian reward adjustments \citep{ma2025metacognition}. We focus on perplexity as the most widely used and simplest such signal. It requires no step-level training and is available for any autoregressive model, making it a representative test of whether training-free intrinsic signals can drive reasoning-time search. If perplexity sufficed, one could avoid the expense of training specialized PRMs. Yet it is unclear whether perplexity is informative enough at the granularity of intermediate reasoning steps, or whether its noise causes search to select fluent but unfaithful steps.

A related overestimation phenomenon has been observed in model-based reinforcement learning, where tree search with learned value functions can degrade policy performance \citep{dalal2021improve}. Our analysis addresses the distinct setting of LLM beam search, where the scorer is a perplexity model or PRM and the search operates over natural language reasoning traces. We show that beam search introduces a systematic overestimation bias: an extreme-value effect in which the expected maximum score among incorrect candidates grows with search width. From this we derive an explicit criterion for when search is beneficial and a threshold beyond which widening degrades performance.

Our study addresses three questions, each answered by a corresponding contribution:
\begin{enumerate}
    \item \textbf{What determines the maximum useful beam width?} We develop a formal analysis based on Extreme Value Theory showing that beam selection introduces a systematic overestimation bias proportional to scorer noise $\sigma$. We derive a predicted maximum useful beam width $\hat{k}$ that depends on the scorer's signal-to-noise ratio $\Delta/\sigma$. Beyond $\hat{k}$, the bias exceeds the quality gap and search degrades performance, a phenomenon we call \emph{reward inversion}.
    \item \textbf{Does scorer quality change the answer?} We validate the theory across three model families and ten reasoning domains (Figure~\ref{fig:scorer_comparison}). Perplexity scoring yields no benefit at any beam width, while PRM scoring yields meaningful gains, as the theory predicts.
    \item \textbf{How should the beam width be chosen in practice?} We propose a principled approach to beam width selection based on diagnostic indicators such as score margins and pilot-run comparisons that signal whether a scorer can support wider search. While prior work adjusts beam width for inference speed \citep{qin2025dsbd}, our criterion targets output quality. We also show that investing in scorer quality is far more effective than widening the beam.
\end{enumerate}

\section{Background and Related Work}

\subsection{Beam Width Selection}
Prior work on beam width selection has focused on inference efficiency. \citet{freitag2017beam} introduced adaptive pruning strategies for neural machine translation, achieving up to 43\% speedup without degrading quality. \citet{qin2025dsbd} proposed Dynamic-Width Speculative Beam Decoding, dynamically adjusting beam count to balance latency and quality in LLMs.

These approaches treat beam width as an \emph{efficiency} parameter. We address a complementary question: when does wider search \emph{degrade} output quality? We show the answer depends on the scorer's signal-to-noise ratio and derive an explicit threshold beyond which additional candidates hurt performance.

\subsection{Process Supervision for LLM Reasoning}
Chain-of-Thought prompting showed that step-by-step reasoning improves LLM performance \citep{wei2022cot, chen2022pot}. Process Reward Models (PRMs) extend this by scoring intermediate steps \citep{lightman2023prm, zheng2025prmchallenges}, with Math-Shepherd \citep{wang2024mathshepherd} reducing annotation costs via Monte Carlo estimation. \citet{cobbe2021verifiers} showed that best-of-$N$ selection with a trained verifier matches a 30$\times$ parameter increase.

PRMs face reward hacking and distributional shift \citep{zheng2025prmchallenges, cui2025prime}. \citet{song2025overcredit} show that PRM false positives impose an asymptotic ceiling on best-of-$N$ performance, and \citet{luo2025soc} establish that implicit Q-functions learned during fine-tuning produce over-optimistic estimates that degrade beam search. Both findings are consistent with the overestimation mechanism we formalize in Section~\ref{sec:theory}.

\subsection{Inference-Time Search}
``Test-time compute scaling'' allocates additional inference computation to improve performance \citep{snell2024scaling}. \citet{wu2025inference} show that smaller models with advanced search achieve Pareto-optimal cost-performance trade-offs, and \citet{liu2025can1b} demonstrate that optimal strategies depend on the policy model, PRM, and problem difficulty. The success of models like DeepSeek-R1 \citep{deepseek2025r1} further motivates understanding these limits. These studies establish that more search \emph{can} help, but none characterize \emph{when it stops helping}---the question our analysis addresses.

Tree of Thoughts \citep{yao2023tree} frames reasoning as search over a tree of partial solutions. Beam search and best-of-$N$ sampling are common strategies in this paradigm \citep{xie2024beamsearch, kang2024selfcertainty}, relying on a scoring function such as a PRM or a proxy like perplexity \citep{jelinek1977perplexity}.

\subsection{Overestimation Bias and Reward Overoptimization}
Systematic overestimation from maximization over noisy estimates was first identified by \citet{thrun1993overestimation} in Q-learning, where function approximation combined with the $\max$ operator produces upward bias. \citet{dalal2021improve} extended this to tree search with learned value functions, showing that deeper search can degrade policy performance in model-based RL.

In the LLM setting, \citet{gao2023overoptimization} demonstrated reward overoptimization: optimizing against an imperfect reward model via best-of-$N$ sampling degrades true performance past a critical $N$---a manifestation of Goodhart's law. Their experiments represent a depth-1, width-$N$ special case of beam search. The critical $N$ at which performance degrades is analogous to $\hat{n}$ in our formulation, and the degradation is consistent with the overestimation mechanism that Lemma~\ref{lem:overestimation} quantifies via Extreme Value Theory.

\section{Theoretical Framework: Overestimation Bias in Beam Selection}
\label{sec:theory}

We develop a formal analysis of why beam search with noisy scorers can degrade performance. The core insight is structural: single-sample decoding returns one draw from the scorer and incurs no selection bias, while beam search selects from a pool of $n$ noisy draws, introducing a systematic upward bias in estimated quality that grows with both $n$ and the scorer noise $\sigma$. Our analysis identifies a predicted maximum useful candidate pool size $\hat{n}$: increasing the pool up to $\hat{n}$ is guaranteed to help, but beyond $\hat{n}$ performance may degrade. The predicted beam width $\hat{k}$ then follows from $\hat{n}$ via the expansion scheme and is not itself the universal quantity.

\subsection{Setup and Notation}

We analyze beam search at the \emph{reasoning-step} level: each ``token'' in our search tree is a complete reasoning step (a paragraph of chain-of-thought), not a single word. This is the granularity used by Tree of Thoughts \citep{yao2023tree} and PRM-guided search \citep{lightman2023prm}, unlike classical token-level beam search.

Consider a single beam selection step. The current beam of $k$ paths each generates $k$ candidate continuations, producing $n = k^2$ total candidates. This symmetric construction is a design choice, not a theoretical requirement: tying the expansion factor to the beam width keeps $k$ as the single tuning parameter and matches standard beam search implementations. The theory itself operates on the candidate pool size $n$ and applies regardless of how $n$ decomposes into beams and expansions.

Each candidate is an independent sample from the language model's conditional distribution given its parent path, scored in a separate forward pass. A scorer assigns each candidate $i$ a score $R_i$ modeled as:
\begin{equation}
    \label{eq:score_model}
    R_i = \mu_i + \epsilon_i,
\end{equation}
where $\mu_i \in \mathbb{R}$ is the \emph{true quality} of path $i$ and $\epsilon_i$ is scorer noise. Because candidates are sampled and scored independently, the noise terms $\epsilon_1, \dots, \epsilon_n$ are mutually independent by construction. Beam search selects the candidate with the highest score: $i^* = \argmax_{i} R_i$.

We use the term ``single-sample decoding'' rather than ``greedy'' because all experiments use temperature 0.7; beam width 1 simply means no selection step is performed. The structural distinction is between \emph{unselected} decoding: $k=1$, where the score is a single noisy draw with no selection bias; and \emph{beam-selected} decoding: $k > 1$, where the maximization over $k$ noisy scores introduces upward bias.

\subsection{Assumptions}

\begin{assumption}[Gaussian Homoscedastic Noise]
    \label{ass:noisy_scorer}
    The scorer noise terms are Gaussian with common variance: $\epsilon_i \sim \mathcal{N}(0, \sigma^2)$ for all $i = 1, \dots, n$.
\end{assumption}

The noise level $\sigma$ is a property of the scorer, not of individual candidates. A high-noise scorer (large $\sigma$, e.g.\ perplexity) assigns scores that are only loosely correlated with true quality; a low-noise scorer (small $\sigma$, e.g.\ a trained PRM) tracks quality more faithfully. The overestimation bias arises not from any asymmetry in noise across candidates, but from the structural asymmetry between \emph{unselected} decoding (one draw, no bias) and \emph{beam-selected} decoding (maximum of $k$ draws, upward bias).

\begin{assumption}[Two-Class Quality]
    \label{ass:two_class}
    Among the $n$ candidates, there is one \emph{correct-type} candidate with true quality $\mu_c$ and $n-1$ \emph{incorrect-type} candidates with common true quality $\mu_w < \mu_c$. Define the quality gap $\Delta := \mu_c - \mu_w > 0$.
\end{assumption}

This simplification isolates the effect of the maximization and represents the \emph{hardest case} for beam search: only one candidate is correct, so beam selection must identify it among $n-1$ equally plausible alternatives. If multiple candidates are correct (as often occurs in practice), at least one is more likely to survive beam selection, making the analysis strictly more favorable. The one-correct-candidate model therefore yields worst-case guarantees (see Remark~\ref{rem:stylized} for the precise sense in which the resulting $\hat{k}$ is a lower bound on the true safe beam width $k^*$).

\subsection{Overestimation Bias from Beam Selection}

The beam search selects the candidate with the highest score. When multiple noisy estimates are compared, the maximum is biased upward, a well-known consequence of order statistics. The following lemma quantifies this for our setting using the Generalized Extreme Value (GEV) distribution.

\begin{lemma}[Overestimation Bias]
    \label{lem:overestimation}
    Under Assumptions~\ref{ass:noisy_scorer} and \ref{ass:two_class}, let $n$ denote the total number of candidates with $n - 1$ incorrect-type candidates. The expected score of the best incorrect candidate is well-approximated by the GEV limit:
    \begin{equation}
        \label{eq:overestimation}
        \mathbb{E}\left[\max_{j=1,\dots,n-1} R_j\right] \approx \mu_w + B(\sigma, n-1),
    \end{equation}
    where the overestimation bias $B(\sigma, n-1)$ for $n \geq 3$ is given by
    \begin{equation}
        \label{eq:bias_term}
\sigma\!\left[
\Phi^{-1}\!\left(1-\tfrac{1}{n-1}\right)
+ \gamma_{\mathrm{EM}}
\frac{\sigma^{\mathrm{GEV}}(n-1)}{\sigma}
\right]
    \end{equation}
    with $\Phi^{-1}$ the standard normal quantile function, $\gamma_{\mathrm{EM}} \approx 0.5772$ the Euler--Mascheroni constant, and $\sigma^{\mathrm{GEV}}(n-1) = \sigma[\Phi^{-1}(1 - \frac{1}{\mathrm{e}(n-1)}) - \Phi^{-1}(1 - \frac{1}{n-1})]$ the GEV scale parameter, where $\mathrm{e}$ is Euler's number. For $n \gg 1$, this simplifies to:
    \begin{equation}
        \label{eq:bias_approx}
        B(\sigma, n-1) \approx \sigma\sqrt{2\log(n-1)}.
    \end{equation}
    The correct candidate's expected score is simply $\mathbb{E}[R_c] = \mu_c$ (no bias, since it is a single draw).
\end{lemma}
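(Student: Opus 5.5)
The plan is to reduce the statement to a classical extreme-value computation for i.i.d.\ Gaussians. Under Assumptions~\ref{ass:noisy_scorer} and~\ref{ass:two_class}, the incorrect-type scores are $R_j = \mu_w + \epsilon_j$ with $\epsilon_j \stackrel{\text{iid}}{\sim} \mathcal{N}(0,\sigma^2)$. Hence $\max_j R_j = \mu_w + \sigma M_m$, where $M_m := \max_{j\le m} Z_j$ for standard normals $Z_j$ and $m := n-1$. By linearity of expectation it suffices to show $\mathbb{E}[M_m] \approx a_m + \gamma_{\mathrm{EM}} b_m$ with
\[
a_m = \Phi^{-1}(1-\tfrac{1}{m}), \qquad b_m = \Phi^{-1}\!\left(1-\tfrac{1}{\mathrm{e}m}\right) - a_m .
\]
Multiplying by $\sigma$ then recovers~\eqref{eq:bias_term}, since $\sigma b_m = \sigma^{\mathrm{GEV}}(m)$. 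The restriction $n\ge 3$ only ensures $m\ge 2$, so that $a_m$ is finite and positive.

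\textbf{Step 1 (Gumbel domain of attraction).} I will invoke the Fisher--Tippett--Gnedenko theorem. The Gaussian lies in the Gumbel domain of attraction, since its tail satisfies the von Mises condition via the Mills ratio $1-\Phi(x)\sim \phi(x)/x$. Therefore $(M_m - a_m)/b_m \Rightarrow G$ with $G$ standard Gumbel, for any valid normalizing sequences $(a_m,b_m)$.

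\textbf{Step 2 (choice of normalizing constants).} I will justify the quantile choice above. The canonical $a_m$ solves $m(1-\Phi(a_m))=1$. For $b_m$, the Gumbel CDF $\exp(-e^{-x})$ reaches the value $e^{-e^{-1}}$ at $x=1$, which corresponds to tail mass $1/(\mathrm{e}m)$. So $a_m+b_m = \Phi^{-1}(1-1/(\mathrm{e}m))$, which matches the GEV scale in the statement. This choice is asymptotically equivalent to the textbook $b_m = 1/(m\phi(a_m)) \sim 1/a_m$.

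\textbf{Step 3 (passing to expectations).} Since $\mathbb{E}[G]=\gamma_{\mathrm{EM}}$, I need convergence of means, not just in distribution. I will get it from uniform integrability of $(M_m-a_m)/b_m$. The upper tail is controlled by a union bound, $P(M_m > a_m + b_m x) \le m(1-\Phi(a_m+b_m x))$, which decays exponentially in $x$. The lower tail is controlled by $P(M_m \le a_m - b_m x) = \Phi(a_m-b_m x)^m$, which is doubly-exponentially small. Uniform integrability then gives
\[
\mathbb{E}[M_m] = a_m + \gamma_{\mathrm{EM}} b_m + o(b_m),
\]
which is the ``$\approx$'' of~\eqref{eq:overestimation}.

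\textbf{Step 4 (simplification~\eqref{eq:bias_approx}).} Inverting the Mills ratio gives
\[
a_m = \sqrt{2\log m} - \frac{\log\log m + \log 4\pi}{2\sqrt{2\log m}} + o\!\left((\log m)^{-1/2}\right),
\]
and $b_m = O\big((\log m)^{-1/2}\big)$. Hence $a_m+\gamma_{\mathrm{EM}}b_m \sim \sqrt{2\log m}$. The final claim is immediate: $R_c=\mu_c+\epsilon_c$ is a single unselected draw, so $\mathbb{E}[R_c]=\mu_c$.

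The main obstacle is Step 3. The lemma states an approximation whose error is only $o(b_m)=o((\log m)^{-1/2})$, and Gaussian maxima converge notoriously slowly. I will be explicit that the ``$\approx$'' means asymptotic equivalence as $m\to\infty$, with relative error vanishing. I will not claim a bound that is uniform in small $n$.
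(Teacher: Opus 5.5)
Your proposal is correct and follows the paper's own route: write the incorrect scores as i.i.d.\ Gaussians, invoke Fisher--Tippett--Gnedenko with location $\Phi^{-1}(1-1/m)$ and scale $\Phi^{-1}(1-1/(\mathrm{e}m)) - \Phi^{-1}(1-1/m)$, take the Gumbel mean $\gamma_{\mathrm{EM}}$, and simplify using normal-quantile asymptotics. Your uniform-integrability step and your second-order expansion of $a_m$ are more careful than the paper, which equates the expectation with the limiting Gumbel mean without justification and uses the cruder $\Phi^{-1}(1-1/(n-1)) \approx \sqrt{2\log(n-1)} - 1/2$; the paper also asserts, without proof, a relative error below 2\% for $n \geq 4$, a small-$n$ claim you rightly decline to make.
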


\paragraph{Proof outline.} The $n-1$ incorrect scores are i.i.d.\ Gaussian draws. By the Fisher--Tippett--Gnedenko theorem, their maximum converges to a Gumbel distribution whose location and scale parameters yield the bias term~\eqref{eq:bias_term}. The large-$n$ simplification~\eqref{eq:bias_approx} follows from a standard asymptotic expansion of the normal quantile. We defer the full proof to Appendix~\ref{app:proof_overestimation}.

The key asymmetry is structural, not distributional: the incorrect candidates receive a ``free bonus'' of approximately $\sigma\sqrt{2\log(n-1)}$ from the maximization, while the single correct candidate does not. Even though all candidates have the same noise $\sigma$, beam search has a systematic tendency to prefer incorrect paths when this bonus exceeds the quality gap $\Delta$.

\begin{remark}[Conservatism of the bound]
    \label{rem:stylized}
    Two modeling choices make our failure bound pessimistic, and both work in the same direction. First, we bound whether \emph{any} incorrect candidate outscores the correct one ($\Pr(\max_j R_j > R_c)$), whereas beam search retains the top $k$ candidates and eliminates the correct path only when at least $k$ incorrect candidates outscore it---a strictly harder condition to satisfy. Second, Assumption~\ref{ass:two_class} posits a single correct candidate; multiple correct paths only increase the chance that at least one survives. Because both choices \emph{overestimate} the failure probability, the predicted $\hat{k}$ is a \textbf{lower bound} on the true safe beam width $k^*$: beam search is guaranteed to help for $k \leq \hat{k}$, and may continue to help beyond.
\end{remark}

\subsection{Sub-Optimal Selection Probability}

We now bound the probability that beam search selects a sub-optimal path.

\begin{theorem}[Sub-Optimal Selection Bound]
    \label{thm:suboptimal}
    Under Assumptions~\ref{ass:noisy_scorer} and \ref{ass:two_class}, the probability that beam search selects an incorrect candidate over the correct one is bounded by:
    \begin{equation}
        \label{eq:suboptimal_bound}
        \Pr\!\left(\max_{j=1,\dots,n-1} R_j > R_c\right) \leq \left(1 + \frac{\Delta_{\mathrm{eff}}^2}{2\sigma^2}\right)^{-1},
    \end{equation}
    where $\Delta_{\mathrm{eff}} = \Delta - B(\sigma, n-1)$ is the effective quality gap after accounting for overestimation bias, and we require $\Delta_{\mathrm{eff}} > 0$.
\end{theorem}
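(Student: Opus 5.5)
The bound $(1+t^2/(2\sigma^2))^{-1}$ has the shape of Cantelli's one-sided Chebyshev inequality, $\Pr(X-\mathbb{E}X \ge t)\le \mathrm{Var}(X)/(\mathrm{Var}(X)+t^2) = (1+t^2/\mathrm{Var}(X))^{-1}$. So I will apply Cantelli to the difference $D := M - R_c$, where $M := \max_{j\le n-1} R_j$ is the best incorrect score. The event in \eqref{eq:suboptimal_bound} is $\{D>0\}$.

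\textbf{Step 1: the mean of $D$.} By Lemma~\ref{lem:overestimation}, $\mathbb{E}[M]\approx \mu_w + B(\sigma,n-1)$ and $\mathbb{E}[R_c]=\mu_c$. Hence $\mathbb{E}[D] \approx \mu_w + B - \mu_c = -\Delta_{\mathrm{eff}}$, which is negative exactly because we assume $\Delta_{\mathrm{eff}}>0$. Then $\{D>0\} = \{D-\mathbb{E}D > \Delta_{\mathrm{eff}}\}$, and Cantelli with $t=\Delta_{\mathrm{eff}}$ gives
\[
\Pr(D>0)\le \Bigl(1+\tfrac{\Delta_{\mathrm{eff}}^2}{\mathrm{Var}(D)}\Bigr)^{-1}.
\]

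\textbf{Step 2: the variance of $D$.} The noise terms are mutually independent, so $R_c$ is independent of $M$ and $\mathrm{Var}(D)=\mathrm{Var}(M)+\mathrm{Var}(R_c)=\mathrm{Var}(M)+\sigma^2$. It then suffices to show $\mathrm{Var}(M)\le\sigma^2$, which gives $\mathrm{Var}(D)\le 2\sigma^2$. Since the Cantelli bound is increasing in the variance, this yields the claimed $(1+\Delta_{\mathrm{eff}}^2/(2\sigma^2))^{-1}$.

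To bound $\mathrm{Var}(M)$ I would use the Gaussian Poincaré inequality, or equivalently Gaussian concentration for Lipschitz functions. The map $x\mapsto\max_j(\mu_w+\sigma x_j)$ on $\mathbb{R}^{n-1}$ is $\sigma$-Lipschitz in Euclidean norm, and its gradient is $\sigma e_{j^*}$ almost everywhere. Poincaré therefore gives $\mathrm{Var}(M)\le \mathbb{E}\|\nabla\|^2=\sigma^2$. As a sanity check, the Gumbel limit has variance $(\pi^2/6)(\sigma^{\mathrm{GEV}})^2$ with $\sigma^{\mathrm{GEV}}\sim\sigma/\sqrt{2\log n}$, which is also well below $\sigma^2$.

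\textbf{Main obstacle.} Step 1 uses the approximation from Lemma~\ref{lem:overestimation} rather than an exact mean. I would handle this by working with the exact $\mathbb{E}[M]$ and noting that the bound holds rigorously with $\Delta_{\mathrm{eff}}$ replaced by $\Delta-(\mathbb{E}[M]-\mu_w)$. The stated form then inherits exactly the level of approximation of \eqref{eq:overestimation}. A fully rigorous version would need $\mathbb{E}[M]-\mu_w\le B(\sigma,n-1)$. For the large-$n$ form \eqref{eq:bias_approx} this does hold: the standard bound $\mathbb{E}\max_{j\le m}\epsilon_j\le\sigma\sqrt{2\log m}$ comes from Jensen's inequality applied to the moment generating function. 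So using that form of $B$ makes the theorem an exact inequality.
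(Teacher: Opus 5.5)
Your proof is correct and shares the paper's skeleton: Cantelli's inequality on $Z=\max_j R_j-R_c$, the mean $-\Delta_{\mathrm{eff}}$ read off Lemma~\ref{lem:overestimation}, independence of $R_c$ from the maximum, and $\mathrm{Var}(\max_j R_j)\le\sigma^2$, hence $\mathrm{Var}(Z)\le 2\sigma^2$. You differ in how two ingredients are justified.

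For the variance, the paper first writes $\mathrm{Var}(Z)$ using the Gumbel variance $\tfrac{\pi^2}{6}(\sigma^{\mathrm{GEV}})^2$. It then invokes, by citation, the order-statistics fact that the variance of the maximum of $m$ i.i.d.\ Gaussians equals $\sigma^2$ at $m=1$ and decreases in $m$. Your Gaussian Poincaré argument is self-contained and bounds the exact variance for every $m$ without any monotonicity claim. It also applies verbatim to any order statistic, since the $k$-th largest score is also $1$-Lipschitz with almost-everywhere gradient a coordinate vector. That is what the top-$k$ refinement in Remark~\ref{rem:stylized} would need.

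For the mean, the paper treats $\mathbb{E}[\max_j R_j]\approx\mu_w+B(\sigma,n-1)$ as an equality, so its proof is only as exact as the GEV approximation. You observe that the Cantelli bound is monotone in the gap, so any upper bound on the true bias suffices. Combined with $\mathbb{E}\max_{j\le m}\epsilon_j\le\sigma\sqrt{2\log m}$, this makes the theorem a genuine inequality for the form~\eqref{eq:bias_approx}. The switch is necessary, not cosmetic: at $n=3$ the form~\eqref{eq:bias_term} gives $B\approx 0.520\,\sigma$, below the exact bias $\sigma/\sqrt{\pi}\approx 0.564\,\sigma$, so the monotonicity argument cannot be run with it there.

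The cost is a stronger hypothesis and a looser bound, since $\sigma\sqrt{2\log(n-1)}$ is noticeably larger than the GEV expression at small $n$ (about $1.48\,\sigma$ versus $0.85\,\sigma$ at $n=4$). The benefit is that the hypothesis $\Delta>\sigma\sqrt{2\log(n-1)}$ is exactly the criterion of Corollary~\ref{cor:criterion}. In your version, the claim that the bias stays below the gap for $n<\hat{n}$ therefore holds exactly, not approximately.
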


\begin{proof}
    See Appendix~\ref{app:proof_suboptimal}.
\end{proof}

The bound is governed by the effective signal-to-noise ratio $\Delta_{\mathrm{eff}}/\sigma$: the true quality gap $\Delta$, reduced by the overestimation bias $B(\sigma, n-1)$, relative to the scorer noise $\sigma$. When scorer noise is high (perplexity), $B(\sigma, n-1)$ is large, shrinking $\Delta_{\mathrm{eff}}$ and making sub-optimal selection likely. When scorer noise is low (PRM), $B(\sigma, n-1)$ is small, preserving the true quality gap. By Remark~\ref{rem:stylized}, this bound is pessimistic: the resulting $\hat{k}$ is a lower bound on the true safe beam width $k^*$.

\subsection{When Does Search Help? A Necessary Condition}

The following corollary provides a practical criterion.

\begin{corollary}[Search Benefit Criterion]
    \label{cor:criterion}
    Beam search over $n$ candidates can improve over single-sample decoding only if the scorer's noise is small enough that the quality gap survives the overestimation bias, yielding the approximate necessary condition:
    \begin{equation}
        \label{eq:criterion}
        \Delta \gtrsim \sigma \sqrt{2\log(n-1)}.
    \end{equation}
    When this condition is violated, the overestimation bias exceeds the quality gap. Beam search then degrades performance by selecting incorrect paths whose scores have been inflated by the maximization.
\end{corollary}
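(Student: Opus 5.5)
The corollary is essentially a reading of Theorem~\ref{thm:suboptimal} together with Lemma~\ref{lem:overestimation}. The plan is to compare the expected score of the correct candidate with the expected score of the strongest incorrect competitor, and then ask when the bound of Theorem~\ref{thm:suboptimal} can be informative at all.

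\textbf{Step 1: the two expected scores.} By Lemma~\ref{lem:overestimation}, the correct candidate has $\mathbb{E}[R_c]=\mu_c$. The best incorrect candidate has $\mathbb{E}[\max_j R_j]\approx \mu_w + B(\sigma,n-1)$. Their expected difference is therefore
\[
\mathbb{E}[R_c]-\mathbb{E}\bigl[\max_j R_j\bigr]\approx \Delta - B(\sigma,n-1)=\Delta_{\mathrm{eff}}.
\]

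\textbf{Step 2: the contrapositive.} Suppose $\Delta_{\mathrm{eff}}\le 0$. Then the best incorrect score is, in expectation, at least the correct one. The maximum of the $n-1$ incorrect scores concentrates around its Gumbel location with scale $\sigma^{\mathrm{GEV}}=O(\sigma/\sqrt{\log n})$. I would use that concentration to argue that $\Pr(\max_j R_j > R_c)$ is then bounded below by roughly $1/2$. Since $R_c$ is symmetric about $\mu_c$, this holds approximately once the max is treated as nearly deterministic. In the same regime Theorem~\ref{thm:suboptimal} gives no guarantee, because it requires $\Delta_{\mathrm{eff}}>0$. Selection is thus no better than a coin flip biased toward an incorrect path.

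Single-sample decoding, by comparison, returns a candidate drawn from the model's distribution and incurs no selection bias. Beam selection instead systematically picks an inflated incorrect candidate. So beam search cannot improve over single-sample decoding unless $\Delta_{\mathrm{eff}}>0$, i.e.\ $\Delta > B(\sigma,n-1)$.

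\textbf{Step 3: the explicit form.} Substitute the large-$n$ simplification \eqref{eq:bias_approx}, $B(\sigma,n-1)\approx\sigma\sqrt{2\log(n-1)}$, to obtain \eqref{eq:criterion}. The $\gtrsim$ absorbs two things: the GEV approximation error and the lower-order terms of the quantile expansion.

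\textbf{Main obstacle.} The hard part is making ``can improve only if'' rigorous. Comparing expectations does not by itself control probabilities. Turning $\Delta_{\mathrm{eff}}\le0$ into a failure probability of at least a constant requires a lower bound on $\Pr(\max_j R_j > R_c)$, for which I would try an anti-concentration argument using the Gumbel limit. It also requires a model of what ``improvement over single-sample decoding'' means, namely the baseline correctness probability. Given the paper's heuristic level, I would state the criterion as approximate, justified by the expected-gap argument. I would then note that it is consistent with Remark~\ref{rem:stylized}: the condition is conservative, so violating it signals likely degradation rather than proving it.
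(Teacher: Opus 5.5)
Your Steps 1 and 3 are the paper's entire proof. The paper reads off from Lemma~\ref{lem:overestimation} that a benefit requires $\mathbb{E}[R_c]=\mu_c$ to exceed $\mathbb{E}[\max_j R_j]\approx\mu_w+B(\sigma,n-1)$, i.e.\ $\Delta_{\mathrm{eff}}>0$, and then substitutes $B(\sigma,n-1)\approx\sigma\sqrt{2\log(n-1)}$. It treats the ``only if'' as a heuristic reading of this expected-score gap, which is exactly the level you fall back to.

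You should drop Step 2 rather than try to repair it with an anti-concentration argument. A failure probability near $1/2$ does not show that selection is worse than single-sample decoding. In fact, within Assumptions~\ref{ass:noisy_scorer}--\ref{ass:two_class} no argument of that kind can establish necessity. A uniformly random pool member is correct with probability $1/n$. By contrast, $\Pr(R_c>\max_j R_j)=\Pr(\epsilon_c+\Delta>\max_j\epsilon_j)$ equals $1/n$ at $\Delta=0$ by exchangeability and increases strictly in $\Delta$. So selection beats the unselected draw for every $\Delta>0$, whatever the sign of $\Delta_{\mathrm{eff}}$.
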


\begin{proof}
    From Lemma~\ref{lem:overestimation}, beam search benefits require $\Delta_{\mathrm{eff}} = \Delta - B(\sigma, n-1) > 0$. Using the approximation $B(\sigma, n-1) \approx \sigma\sqrt{2\log(n-1)}$ yields the condition.
\end{proof}

Inverting the criterion \eqref{eq:criterion} yields the central object of this paper: the predicted maximum useful candidate pool size.

\begin{corollary}[Maximum Useful Candidate Pool]
    \label{cor:kstar}
    For a scorer with noise $\sigma$ and a problem with quality gap $\Delta > 0$, the predicted maximum number of candidates at which search is expected to help is:
    \begin{equation}
        \label{eq:nstar}
        \hat{n} = 1 + \exp\!\left(\frac{\Delta^2}{2\sigma^2}\right).
    \end{equation}
    For $n \leq \hat{n}$, the overestimation bias is smaller than the quality gap and search can improve over single-sample decoding. For $n > \hat{n}$, the bias dominates and search is expected to degrade performance.
\end{corollary}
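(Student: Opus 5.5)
The plan is to obtain $\hat{n}$ by inverting the necessary condition of Corollary~\ref{cor:criterion}. In that condition we use the large-$n$ form of the bias from Lemma~\ref{lem:overestimation}, namely $B(\sigma,n-1)\approx\sigma\sqrt{2\log(n-1)}$. The threshold $\hat{n}$ is then the pool size at which $\Delta_{\mathrm{eff}} = \Delta - B(\sigma,n-1)$ is exactly zero.

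\textbf{Setting up the equation.} Setting $\Delta_{\mathrm{eff}}=0$ with the approximate bias gives $\Delta = \sigma\sqrt{2\log(\hat{n}-1)}$. Both sides are nonnegative because $\Delta>0$ and $\hat n\ge 2$, so we may square without introducing spurious roots: $\Delta^2/(2\sigma^2) = \log(\hat{n}-1)$. Exponentiating gives $\hat{n}-1 = \exp(\Delta^2/(2\sigma^2))$, which is \eqref{eq:nstar}. Note that $\hat n>2$ always, so the range $n\ge 3$ required in Lemma~\ref{lem:overestimation} is at least partly available.

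\textbf{The two regimes.} The function $n\mapsto \sigma\sqrt{2\log(n-1)}$ is strictly increasing for $n\ge 2$. Hence for $n\le\hat{n}$ we have $B(\sigma,n-1)\le\Delta$, so $\Delta_{\mathrm{eff}}\ge 0$. When the inequality is strict, Theorem~\ref{thm:suboptimal} bounds the probability of sub-optimal selection by $(1+\Delta_{\mathrm{eff}}^2/(2\sigma^2))^{-1}<1$. This means the correct candidate retains a positive margin over the inflated best incorrect score, so search can help relative to an unselected draw. For $n>\hat{n}$ we have $B>\Delta$ and $\Delta_{\mathrm{eff}}<0$. By Lemma~\ref{lem:overestimation}, the expected best incorrect score $\mu_w+B$ then exceeds $\mathbb{E}[R_c]=\mu_c$, and Corollary~\ref{cor:criterion} says search is expected to degrade performance.

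\textbf{Main obstacle.} The delicate point is rigor, not algebra. The statement rests on the asymptotic approximation $B\approx\sigma\sqrt{2\log(n-1)}$ rather than on the exact GEV expression \eqref{eq:bias_term}. Also, ``expected to help/degrade'' compares expectations, whereas the selection event is a probabilistic comparison. I would therefore present the result as holding at the level of the approximations in Lemma~\ref{lem:overestimation}.

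If sharper control is wanted, I would first check monotonicity of the exact $B(\sigma,n-1)$ in $n$, which follows from monotonicity of $\Phi^{-1}$. The exact threshold is then still the unique root of $B=\Delta$. The closed form \eqref{eq:nstar} agrees with this root to leading order in $\Delta/\sigma$.
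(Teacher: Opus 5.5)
Your proposal is correct and is essentially the paper's proof, which consists only of noting that the condition $\Delta > \sigma\sqrt{2\log(n-1)}$ from Corollary~\ref{cor:criterion} is equivalent to $n-1 < \exp(\Delta^2/2\sigma^2)$. Your monotonicity argument for the two regimes and your caveat that everything holds only under the approximation $B(\sigma,n-1)\approx\sigma\sqrt{2\log(n-1)}$ make explicit what the paper leaves implicit; one minor slip is that $\hat n>2$ does not guarantee an integer $n$ with $3\le n\le\hat n$ (take $\hat n\in(2,3)$, which is exactly the low-SNR perplexity regime), so that side remark should be dropped or restricted to $\hat n\ge 3$.
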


\begin{proof}
    The condition $\Delta > \sigma\sqrt{2\log(n-1)}$ from Corollary~\ref{cor:criterion} is equivalent to $n - 1 < \exp(\Delta^2 / 2\sigma^2)$.
\end{proof}

The fundamental limit is the candidate pool size $\hat{n}$. The predicted beam width $\hat{k}$ depends on how candidates are generated. In our experiments, beam width $k$ produces $n = k^2$ candidates per selection step (\S\ref{sec:experiments}), giving $\hat{k} = \lfloor\sqrt{\hat{n}}\rfloor$. Other expansion strategies would yield a different mapping from $\hat{n}$ to $\hat{k}$. The dependence of $\hat{n}$ on the squared signal-to-noise ratio $(\Delta/\sigma)^2$ is exponential, which has two implications. When the SNR is poor ($\Delta/\sigma \ll 1$, as with perplexity), $\hat{n}$ collapses to approximately 2, giving $\hat{k} = 1$: no beam width helps. When the SNR is moderate ($\Delta/\sigma \gtrsim 2.33$, as may be the case with a trained PRM), $\hat{n}$ reaches 16 or higher, giving $\hat{k} \geq 4$, consistent with our empirical observation that PRM-guided search helps through beam width 4. The exponential sensitivity means that small improvements in scorer quality can dramatically expand the useful search range.

\section{Experiments}
\label{sec:experiments}

\subsection{Setup}

We evaluate on MR-BEN \citep{zeng2024mrben}: 5,975 questions across 10 subjects spanning math, science, medicine, and logic. We use three policy models: \texttt{Qwen2.5-7B-Instruct} \citep{qwen2.5_hf}, \texttt{Llama-3.1-8B-Instruct} \citep{llama3.1_hf}, and \texttt{Mistral-7B-Instruct-v0.3} \citep{mistral_hf}. For scoring, we compare: (1)~\textbf{Perplexity}: each model self-scores using negative log-perplexity of the full reasoning trace, so that higher scores indicate greater confidence; (2)~\textbf{MathShepherd PRM}: \texttt{math-shepherd-mistral-7b-prm} \citep{wang2024mathshepherd}, a process reward model outputting step-correctness probabilities. At each depth, beam selection uses only the most recent step's score, not a cumulative aggregate. For perplexity, this is the negative log-perplexity of the entire sequence up to that step. For PRM, this is the correctness probability of the latest step.

We run beam search with beam width $k \in \{1, 2, 3, 4\}$. At each depth, each of the $k$ beams generates $k$ candidate continuations, yielding $n = k^2$ total candidates from which the top $k$ are selected. At $k=1$, the policy produces a single sample per step and no selection is performed; we refer to this as \emph{single-sample decoding}. The maximum depth is 24 and the temperature is 0.7 for all configurations. We run 3 seeds per configuration and report means and standard errors. At $k$=4, a single subject--seed combination requires approximately one week of GPU time on a single A100, making beam widths beyond 4 computationally prohibitive with our current setup.

\subsection{Main Results}

Tables~\ref{tab:perplexity_results} and \ref{tab:prm_comparison} present the central comparison. Perplexity-scored beam search provides no benefit: Qwen is flat ($\pm 0.1$ points), while Llama and Mistral \emph{degrade} at wider beams ($-$5.4 and $-$3.9 points at $k$=4), consistent with the overestimation bias predicted by Lemma~\ref{lem:overestimation}. The model ranking Qwen $>$ Llama $>$ Mistral is stable across all beam widths, confirming that policy capability dominates when the reward signal is unreliable. Standard errors also grow with beam width under perplexity scoring (e.g., Llama: 1.3 at $k$=1 vs.\ 3.4 at $k$=4), reflecting the increasing role of noise-driven selection at wider beams.

\begin{table*}[h!]
\centering

\begin{tabular}{@{}lccccc@{}}
\toprule
\textbf{Model} & $k=1$ & $k=2$ & $k=3$ & $k=4$ & $\Delta$ ($k$=4$-$$k$=1) \\
\midrule
Qwen    & 71.3 $\pm$ 1.0 & 71.1 $\pm$ 1.0 & 71.2 $\pm$ 1.1 & \textbf{71.4} $\pm$ 1.5 & +0.1 \\
Llama   & \textbf{59.7} $\pm$ 1.3 & 58.4 $\pm$ 1.5 & 58.5 $\pm$ 2.1 & 54.3 $\pm$ 3.4 & $-$5.4 \\
Mistral & \textbf{42.4} $\pm$ 1.6 & 42.0 $\pm$ 1.7 & 41.0 $\pm$ 2.1 & 38.5 $\pm$ 2.7 & $-$3.9 \\
\bottomrule
\end{tabular}
\caption{Perplexity-scored beam search: Macro success rate (\%) by model and beam width. Values are mean $\pm$ standard error across subjects and seeds.}
\label{tab:perplexity_results}
\end{table*}

Replacing perplexity with PRM scoring, while keeping the model and algorithm identical, transforms beam search scaling from flat or negative to strongly positive (Table~\ref{tab:prm_comparison}, Figure~\ref{fig:scorer_comparison}). All three models benefit at $k$=4: Mistral gains +8.9 points (42.3\% to 51.2\%), Llama gains +6.2 points (59.2\% to 65.4\%), and Qwen gains +3.4 points (71.7\% to 75.1\%). The largest relative gain occurs for the weakest model (Mistral), suggesting PRM-guided search is especially valuable when the policy alone struggles.

\begin{table*}[h!]
\centering

\begin{tabular}{@{}lccccl@{}}
\toprule
\textbf{Model} & $k=1$ & $k=2$ & $k=3$ & $k=4$ & $\Delta$ (best$-$$k$=1) \\
\midrule
Qwen    & 71.7 $\pm$ 1.2 & 73.8 $\pm$ 1.3 & 73.9 $\pm$ 1.4 & \textbf{75.1} $\pm$ 1.3 & +3.4 \\
Llama   & 59.2 $\pm$ 1.7 & 63.4 $\pm$ 1.8 & 64.7 $\pm$ 1.8 & \textbf{65.4} $\pm$ 2.4 & +6.2 \\
Mistral & 42.3 $\pm$ 2.7 & 50.8 $\pm$ 3.3 & 47.9 $\pm$ 3.6 & \textbf{51.2} $\pm$ 3.7 & +8.9 \\
\bottomrule
\end{tabular}
\caption{PRM-scored beam search: Macro success rate (\%) by model and beam width. Values are mean $\pm$ standard error across subjects and seeds.}
\label{tab:prm_comparison}
\end{table*}

\begin{figure*}[t]
    \centering
    \includegraphics[width=\linewidth]{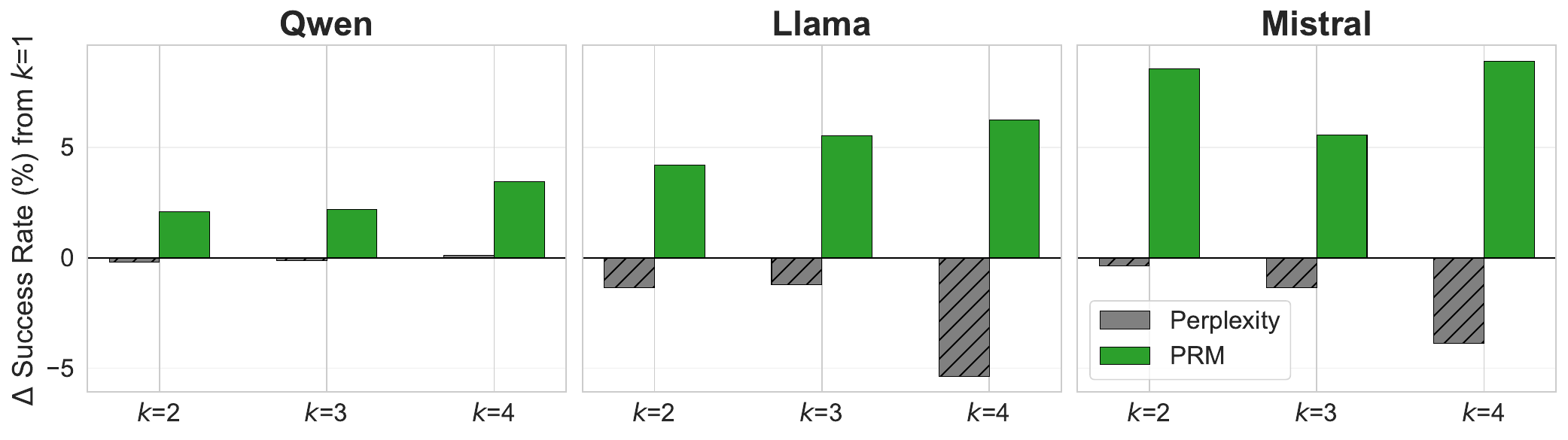}
    \caption{Per-model change from $k$=1 by scorer. Hatched bars indicate negative $\Delta$. The PRM unlocks search benefits where perplexity cannot.}
    \label{fig:delta_comparison}
\end{figure*}

\subsection{Interpreting Results Through the Maximum Useful Beam Width}
Corollary~\ref{cor:kstar} predicts a maximum useful beam width $\hat{k}$ for each scorer. Our results are consistent with:
\begin{itemize}[leftmargin=*,topsep=2pt,itemsep=2pt]
    \item \textbf{Perplexity}: $\hat{k} = 1$ for all three models. The predicted candidate pool size $\hat{n} \approx 2$ means even a single incorrect competitor can mislead the scorer, and no tested beam width produces improvement. Llama and Mistral actively degrade at wider beams.
    \item \textbf{PRM}: $\hat{k} \geq 4$ for all three models. All peak at $k$=4. Qwen and Llama scale monotonically; Mistral dips at $k$=3 before recovering at $k$=4, likely reflecting higher variance at wider beams rather than a true optimum near $k$=2.
\end{itemize}
The perplexity and PRM results are not two separate findings; they are two points on the $\hat{k}$ continuum predicted by a single formula. Figure~\ref{fig:delta_comparison} visualizes the contrast: PRM bars rise while perplexity bars are flat or negative. Per-subject breakdowns appear in Appendix~\ref{app:per_subject}.

\section{Case Study: Diagnosing Reward Inversion}
\label{sec:case_study}

The aggregate results above show \emph{that} scorer quality determines beam search effectiveness. This section examines \emph{how} the failure unfolds at the level of individual beam selections. We trace beam search behavior for \texttt{Mistral-7B-Instruct-v0.3} on \texttt{high\_school\_biology} with perplexity scoring, where $k$=4 scored 59.7\% versus 63.0\% for single-sample decoding, despite 4$\times$ the search effort (see also Figures~\ref{fig:case_study_acc} and \ref{fig:case_study_inversion} in Appendix~\ref{app:case_study_figs}).

\begin{figure}[t]
    \centering
    \includegraphics[width=\columnwidth]{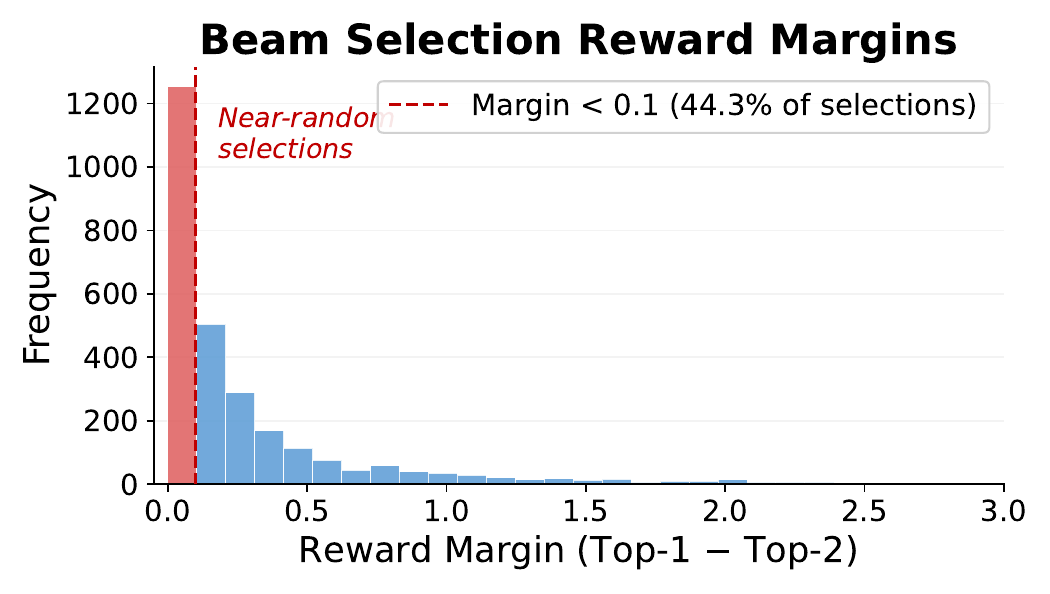}
    \caption{Reward margin distribution for perplexity-scored beam selections. The peak near zero confirms that 44\% of selections have negligible margins, signaling that the scorer cannot support wider search.}
    \label{fig:reward_margin_dist}
\end{figure}

\subsection{Reward Inversion at the Trace Level}

Problem \texttt{cf68a215} illustrates the mechanism concretely. At depth 2, the beam selection log shows:
{\small
\begin{verbatim}
[BEAM_SELECT] depth=2 | candidates=12
  top_rewards=[-3.04, -3.05, -3.18, ...]
  #1: reward=-3.0405 [ANS:C] <- SELECTED
  #2: reward=-3.0496 [ANS:A] <- REJECTED
\end{verbatim}
}
The correct path (A) was rejected in favor of the incorrect path (C) by a margin of just 0.009. Among 12 candidates, the maximization inflated the score of candidate C by enough to overtake A---a microscopic margin that a more discriminative scorer would not produce.

This is not an isolated case. In 25 disagreement cases where $k$=1 was correct but $k$=4 was wrong, the $k$=4 path consistently had a higher reward, i.e.\ lower perplexity, as shown in Table~\ref{tab:reward_inversion}. The pattern is systematic: beam search selects paths that look better to the scorer but are worse in quality, precisely the reward inversion predicted by Lemma~\ref{lem:overestimation}.

\begin{table}[h!]

\begin{tabular}{@{}lcccc@{}}
\toprule
\textbf{Problem} & \textbf{$k=1$} & \textbf{Correct} & \textbf{$k=4$} & \textbf{Correct} \\
\midrule
cf68a215 & $-6.71$ & \cmark & $-2.84$ & \xmark \\
c1eacc33 & $-4.65$ & \cmark & $-2.19$ & \xmark \\
5c9182fd & $-3.70$ & \cmark & $-2.53$ & \xmark \\
681201fd & $-4.49$ & \cmark & $-2.82$ & \xmark \\
\bottomrule
\end{tabular}
\caption{Reward inversion: $k$=4 selects higher-scoring but incorrect paths.}
\label{tab:reward_inversion}
\end{table}

\subsection{Score Margins as a Diagnostic}

The trace-level analysis above reveals why perplexity scoring fails. Score margins reveal how such failures can be detected without ground-truth labels. Across 2,770 beam selections in this domain, 44.3\% had a reward margin $< 0.1$ between the top two candidates (Figure~\ref{fig:reward_margin_dist}), indicating near-random selection. When margins are this small, the scorer cannot reliably separate correct from incorrect paths, and beam selection reduces to a noisy coin flip.

This observation connects directly to the practical guidance in Section~\ref{sec:discussion}: inspecting score margins on a small pilot run provides a cheap diagnostic for whether a scorer can support wider search. Consistently small margins signal that the effective quality gap $\Delta_{\mathrm{eff}}$ is near zero and $\hat{k}$ is likely low.

\section{Discussion}
\label{sec:discussion}

\subsection{The Optimal Beam Width as a Design Parameter}
Our central result, the predicted maximum useful beam width $\hat{k}$ (Corollary~\ref{cor:kstar}), reframes beam search from a ``wider is better'' heuristic to a principled design decision. The key insight is that $\hat{k}$ depends exponentially on the squared SNR of the scorer: small improvements in scorer quality can dramatically expand the useful beam range, while even state-of-the-art scorers have a finite $\hat{k}$ beyond which additional search is wasted. The single-sample case $k$=1 follows naturally: with no maximization there is no overestimation bias, and the policy's implicit calibration is preserved.

\subsection{Practical Guidance}
The formula $\hat{n} = 1 + \exp(\Delta^2 / 2\sigma^2)$ characterizes $\hat{k}$ in terms of two quantities, the quality gap $\Delta$ and scorer noise $\sigma$, that are not directly observable. We do not claim that $\hat{k}$ can be computed exactly from data. Rather, the analysis provides three forms of guidance:
\begin{enumerate}[leftmargin=*,topsep=2pt,itemsep=2pt]
    \item \textbf{Diagnostic indicators.} Score margins at beam selection provide a qualitative signal. If margins are consistently small, as we observe for 44\% of perplexity selections, the scorer cannot reliably distinguish candidates and $\hat{k}$ is likely low. The dispersion of scores reflects both genuine quality variation and scorer noise, so it serves as a rough proxy for scorer unreliability rather than an estimate of $\sigma$ alone.
    \item \textbf{Pilot runs.} A practical approach is to compare $k$=2 against $k$=1 on a small validation set. If no improvement is observed, the scorer's SNR is too low to support wider search. On problems with ground-truth labels, one can estimate $\Delta/\sigma$ from the separation between score distributions of correct and incorrect candidates.
    \item \textbf{Invest in the scorer, not the beam width.} Since $\hat{n}$ grows exponentially with $(\Delta/\sigma)^2$, reducing $\sigma$ is far more effective than increasing $k$. Investing in reward model quality expands the useful search range, while widening the beam beyond $\hat{k}$ actively degrades performance.
\end{enumerate}

\subsection{Toward Bias-Corrected Beam Search}
Our theory also suggests a correction that could raise $\hat{k}$: rather than selecting $\argmax_i R_i$, subtract the estimated overestimation bias from the maximum. Concretely, one could penalize the best-of-$n$ score by $B(\hat{\sigma}, n-1)$, where $\hat{\sigma}$ is estimated online from the variance of scores across candidates at each selection step. This would effectively increase $\Delta_{\mathrm{eff}}$, raising $\hat{n}$ and thus $\hat{k}$ without changing the scorer. We leave empirical validation to future work.

\section{Conclusion}

This study provides a theoretical and empirical answer to the question: \emph{how wide should you search?} Our EVT-based analysis shows that beam selection introduces a systematic overestimation bias proportional to scorer noise, yielding a predicted maximum useful candidate pool $\hat{n} = 1 + \exp(\Delta^2 / 2\sigma^2)$ and corresponding beam width $\hat{k} = \lfloor\sqrt{\hat{n}}\rfloor$ that depend on the scorer's signal-to-noise ratio. Beyond $\hat{k}$, wider search degrades performance. This analysis correctly predicts our empirical findings: perplexity scoring, with $\hat{k} = 1$, provides zero benefit at any beam width across three models and ten domains, while PRM scoring, with $\hat{k} \geq 4$, unlocks meaningful gains of up to 8.9 percentage points.

The practical message is that beam width is not a hyperparameter to maximize; it has a theoretically grounded optimum determined by scorer quality. While $\Delta$ and $\sigma$ are not directly observable, diagnostic indicators such as score margins and pilot-run comparisons can help judge whether a given scorer supports wider search.

\paragraph{Reproducibility.} Our code is publicly available at \texttt{[anonymous]}, built as a fork of the LLM Reasoners library \citep{hao2024llm}.

\section*{Limitations}

We note limitations of both the theory and the experimental scope.

\paragraph{Two-class quality model.} Assumption~\ref{ass:two_class} collapses candidate quality to a binary correct/incorrect partition with a single gap $\Delta$. Real candidate pools exhibit a spectrum of quality levels. As noted in Remark~\ref{rem:stylized}, this makes our bound conservative: multiple correct candidates or a quality continuum generally makes beam selection easier, not harder.

\paragraph{Scale and beam width range.} We evaluate only 7B-parameter models and beam widths up to $k = 4$. The theory predicts that $\hat{k}$ can be much larger for high-SNR scorers. Testing $k > 4$ and larger models would be needed to empirically locate the turning point for PRM-guided search. We also use a single temperature of $T = 0.7$ throughout. Temperature affects candidate diversity and the effective independence of the candidate pool, which may modulate the relationship between $n$ and overestimation bias.

\paragraph{Intrinsic signal coverage.} Among training-free scoring signals, we evaluate only perplexity. Other intrinsic signals, from entropy-based self-certainty \citep{kang2024selfcertainty} to Markovian reward adjustments \citep{ma2025metacognition}, may offer better signal-to-noise ratios. Extending the empirical comparison to these alternatives is a natural next step.

\section*{Ethical Considerations}
This work analyzes inference-time search strategies for LLM reasoning. We do not collect or use personal data, and all experiments use publicly available models and benchmarks. The primary ethical consideration is computational cost: beam search at width $k$ requires $O(k^2)$ scorer evaluations per step. Our work mitigates this concern by identifying when wider search is wasteful, helping avoid unnecessary computation.

\newpage
\bibliography{references}

\newpage
\onecolumn
\appendix

\section{Proof of Lemma~\ref{lem:overestimation}}
\label{app:proof_overestimation}

\begin{proof}
    By Assumption~\ref{ass:two_class}, the $n-1$ incorrect candidates have scores $R_j = \mu_w + \epsilon_j$ with $\epsilon_j \overset{\text{i.i.d.}}{\sim} \mathcal{N}(0, \sigma^2)$. By the Fisher--Tippett--Gnedenko theorem \citep{fisher1928limiting, gnedenko1943distribution, coles2001introduction}, the maximum of $n-1$ i.i.d.\ Gaussian draws converges to a Gumbel distribution. The approximation is tight even for small $n$, with relative error below 2\% for $n \geq 4$. The location and scale parameters are $\mu^{\mathrm{GEV}} = \mu_w + \sigma \Phi^{-1}(1 - 1/(n-1))$ and $\sigma^{\mathrm{GEV}} = \sigma[\Phi^{-1}(1 - 1/(\mathrm{e}(n-1))) - \Phi^{-1}(1 - 1/(n-1))]$. The expectation of the Gumbel distribution is $\mu^{\mathrm{GEV}} + \gamma_{\mathrm{EM}} \sigma^{\mathrm{GEV}}$, giving \eqref{eq:overestimation}--\eqref{eq:bias_term}. The approximation \eqref{eq:bias_approx} follows from $\Phi^{-1}(1 - 1/(n-1)) \approx \sqrt{2\log(n-1)} - 1/2$ for large $n$ \citep{blair1976impossibility}, with the Euler--Mascheroni correction becoming negligible. The correct candidate is a single draw, so $\mathbb{E}[R_c] = \mu_c$ with no max-induced bias.
\end{proof}

\section{Proof of Theorem~\ref{thm:suboptimal}}
\label{app:proof_suboptimal}

\begin{proof}
We bound $\Pr(\max_{j=1,\dots,n-1} R_j > R_c)$ where $R_c = \mu_c + \epsilon_c$ with $\epsilon_c \sim \mathcal{N}(0, \sigma^2)$ and $\max_j R_j$ follows a GEV distribution as established in Lemma~\ref{lem:overestimation}.

Define $Z := \max_{j} R_j - R_c$. We need to bound $\Pr(Z > 0)$.

By Lemma~\ref{lem:overestimation}, $\max_j R_j \sim \text{GEV}(\mu^{\text{GEV}}, \sigma^{\text{GEV}}, 0)$ with parameters as specified. Since $\max_j R_j$ and $R_c$ are independent, the difference $Z$ has:
\begin{align*}
    \mathbb{E}[Z] &= (\mu_w + B(\sigma, n-1)) - \mu_c = -\Delta + B(\sigma, n-1) = -\Delta_{\mathrm{eff}}, \\
    \text{Var}[Z] &= \underbrace{\frac{\pi^2}{6}(\sigma^{\text{GEV}})^2}_{\text{variance of } \max_j R_j} + \underbrace{\sigma^2}_{\text{variance of } R_c}.
\end{align*}

We bound $\text{Var}[Z]$ from above. The variance of the maximum of $m$ i.i.d.\ $\mathcal{N}(\mu, \sigma^2)$ random variables is at most $\sigma^2$ for all $m \geq 1$ (it equals $\sigma^2$ when $m = 1$ and decreases monotonically with $m$; see \citealt{david2003order}, \S5.4). Therefore $\text{Var}[Z] \leq \sigma^2 + \sigma^2 = 2\sigma^2$.

Applying the Cantelli inequality (one-sided Chebyshev):
\begin{equation*}
    \Pr(Z > 0) = \Pr(Z - \mathbb{E}[Z] > \Delta_{\mathrm{eff}}) \leq \frac{\text{Var}[Z]}{\text{Var}[Z] + \Delta_{\mathrm{eff}}^2} \leq \frac{2\sigma^2}{2\sigma^2 + \Delta_{\mathrm{eff}}^2} = \left(1 + \frac{\Delta_{\mathrm{eff}}^2}{2\sigma^2}\right)^{-1}.
\end{equation*}

The bound is controlled by the squared ratio $(\Delta_{\mathrm{eff}}/\sigma)^2$. As $n$ grows, the overestimation bias $B(\sigma, n-1)$ increases, shrinking $\Delta_{\mathrm{eff}}$ and driving the bound toward~1. At $n = \hat{n}$, we have $\Delta_{\mathrm{eff}} = 0$ and the bound becomes vacuous, consistent with the criterion in Corollary~\ref{cor:criterion}.
\end{proof}

\section{Per-Subject Curves by Model}
\label{app:per_subject}

\subsection{Perplexity Scoring}

\begin{figure}[H]
    \centering
    \includegraphics[width=\linewidth]{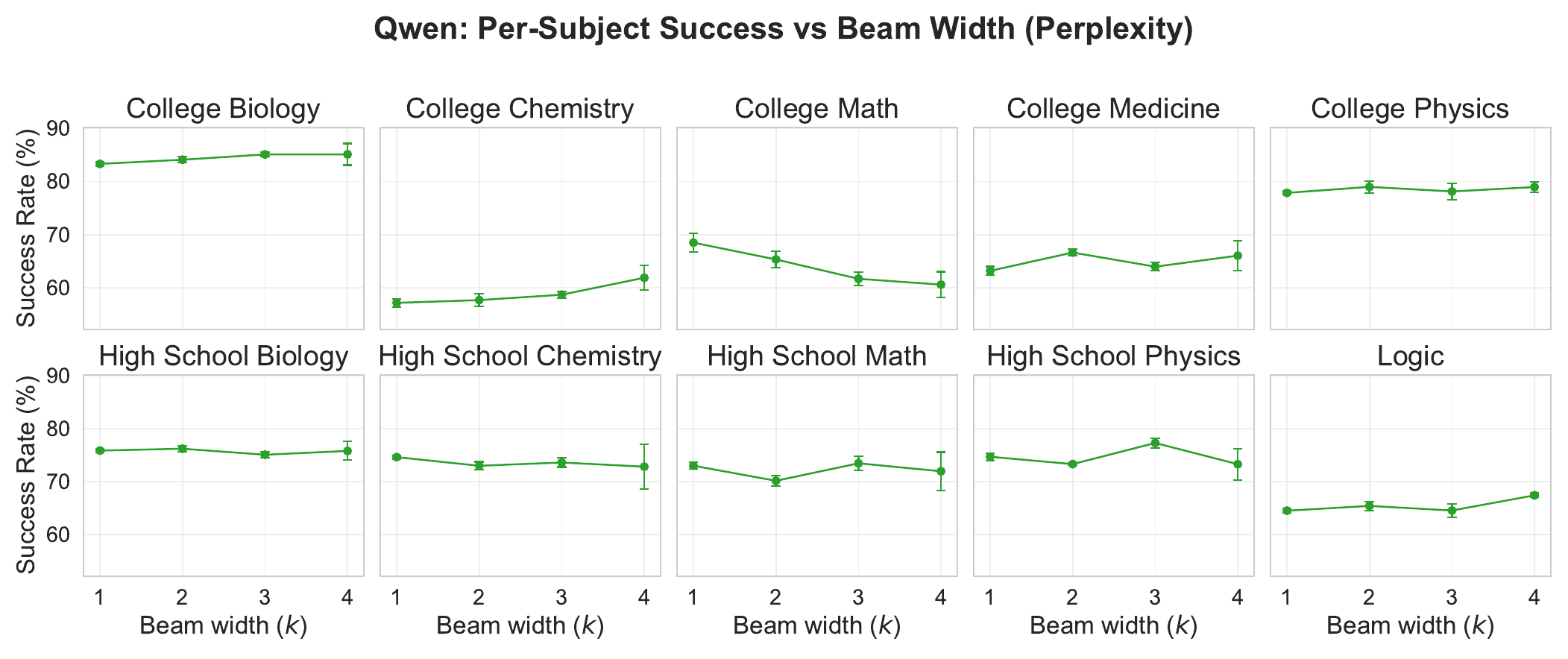}
    \caption{Per-subject success rate vs.\ beam width for \textbf{Qwen2.5-7B-Instruct} with perplexity scoring.}
    \label{fig:qwen_subjects}
\end{figure}

\begin{figure}[H]
    \centering
    \includegraphics[width=\linewidth]{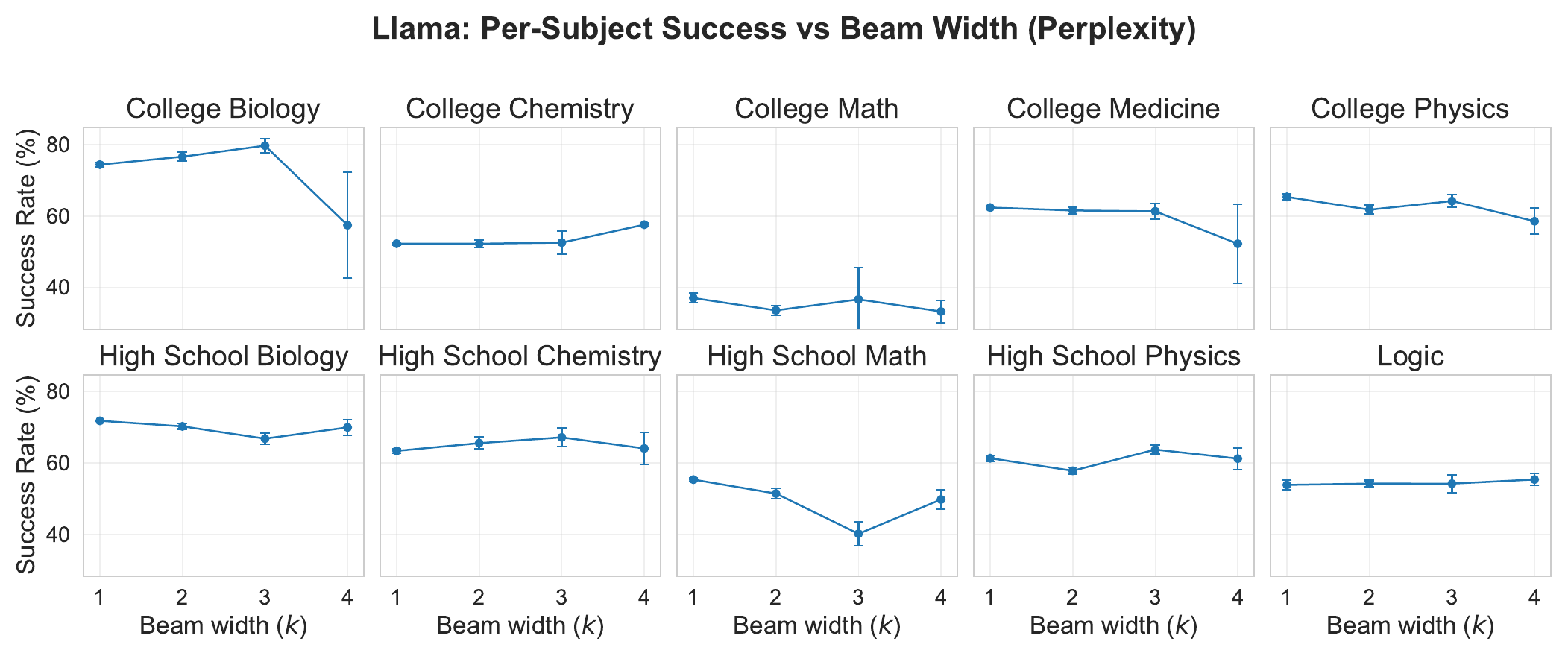}
    \caption{Per-subject success rate vs.\ beam width for \textbf{Llama-3.1-8B-Instruct} with perplexity scoring.}
    \label{fig:llama_subjects}
\end{figure}

\begin{figure}[H]
    \centering
    \includegraphics[width=\linewidth]{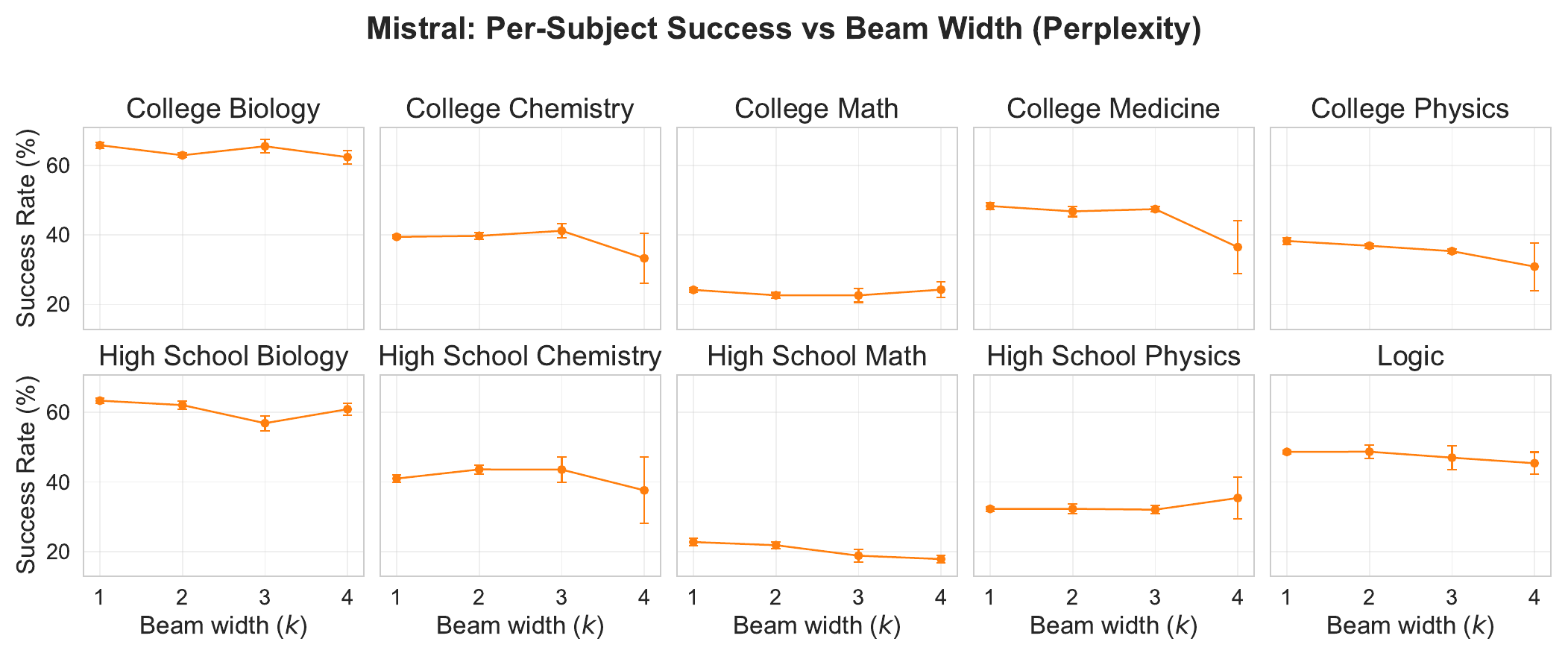}
    \caption{Per-subject success rate vs.\ beam width for \textbf{Mistral-7B-Instruct-v0.3} with perplexity scoring.}
    \label{fig:mistral_subjects}
\end{figure}

\subsection{PRM Scoring}

\begin{figure}[H]
    \centering
    \includegraphics[width=\linewidth]{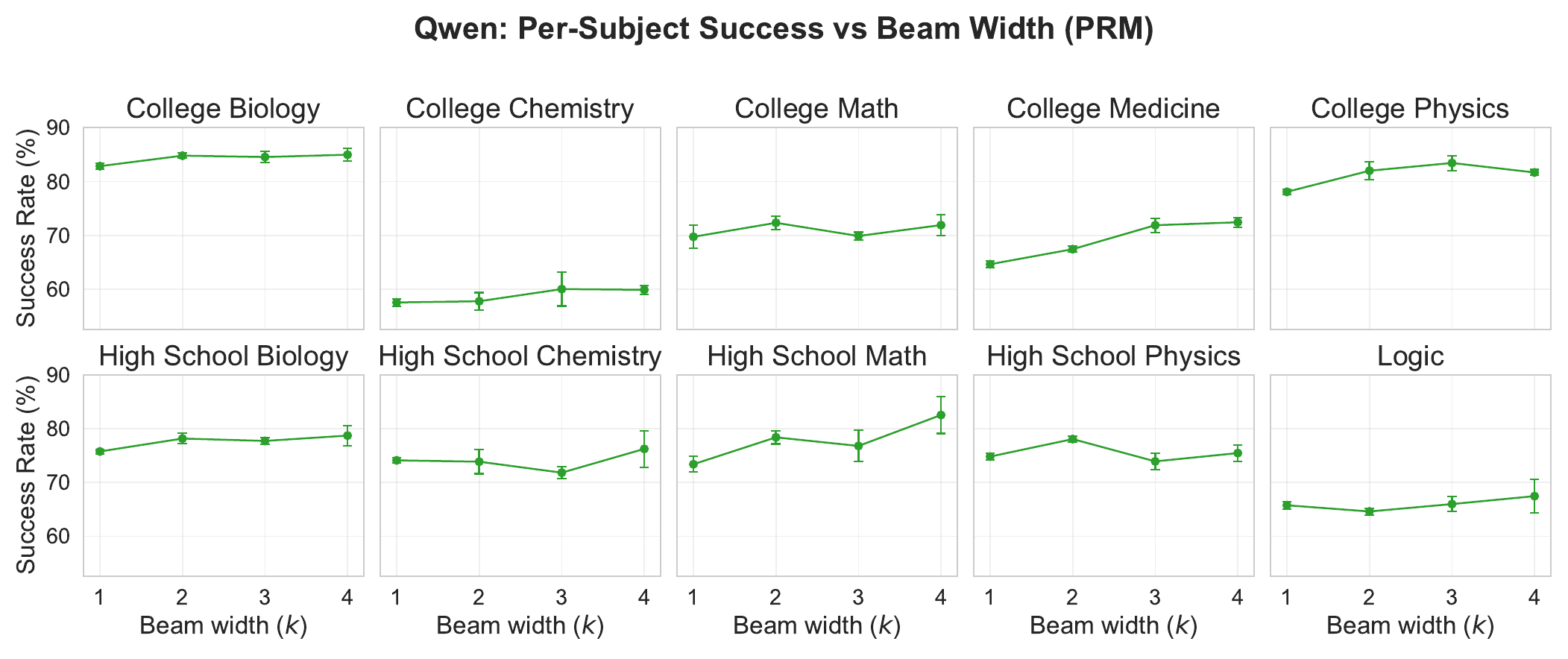}
    \caption{Per-subject success rate vs.\ beam width for \textbf{Qwen2.5-7B-Instruct} with PRM scoring.}
    \label{fig:qwen_subjects_prm}
\end{figure}

\begin{figure}[H]
    \centering
    \includegraphics[width=\linewidth]{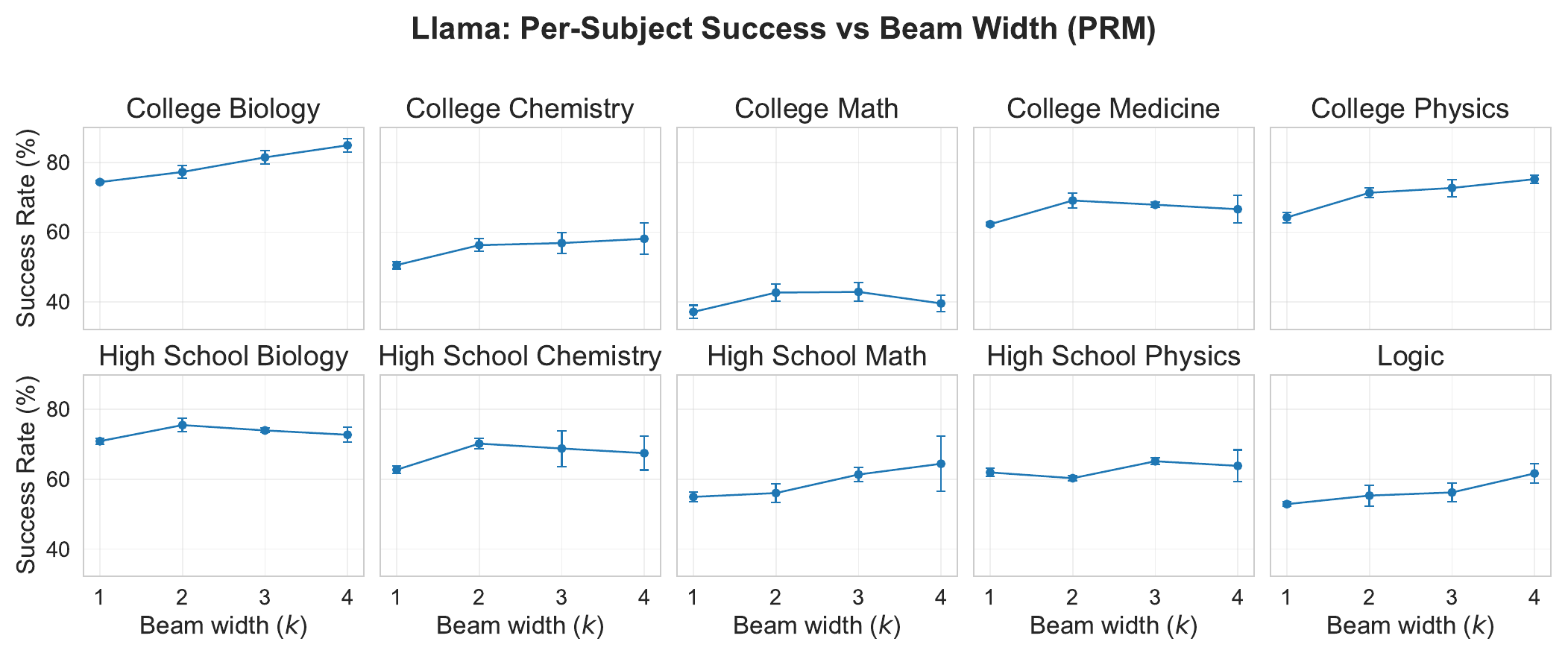}
    \caption{Per-subject success rate vs.\ beam width for \textbf{Llama-3.1-8B-Instruct} with PRM scoring.}
    \label{fig:llama_subjects_prm}
\end{figure}

\begin{figure}[H]
    \centering
    \includegraphics[width=\linewidth]{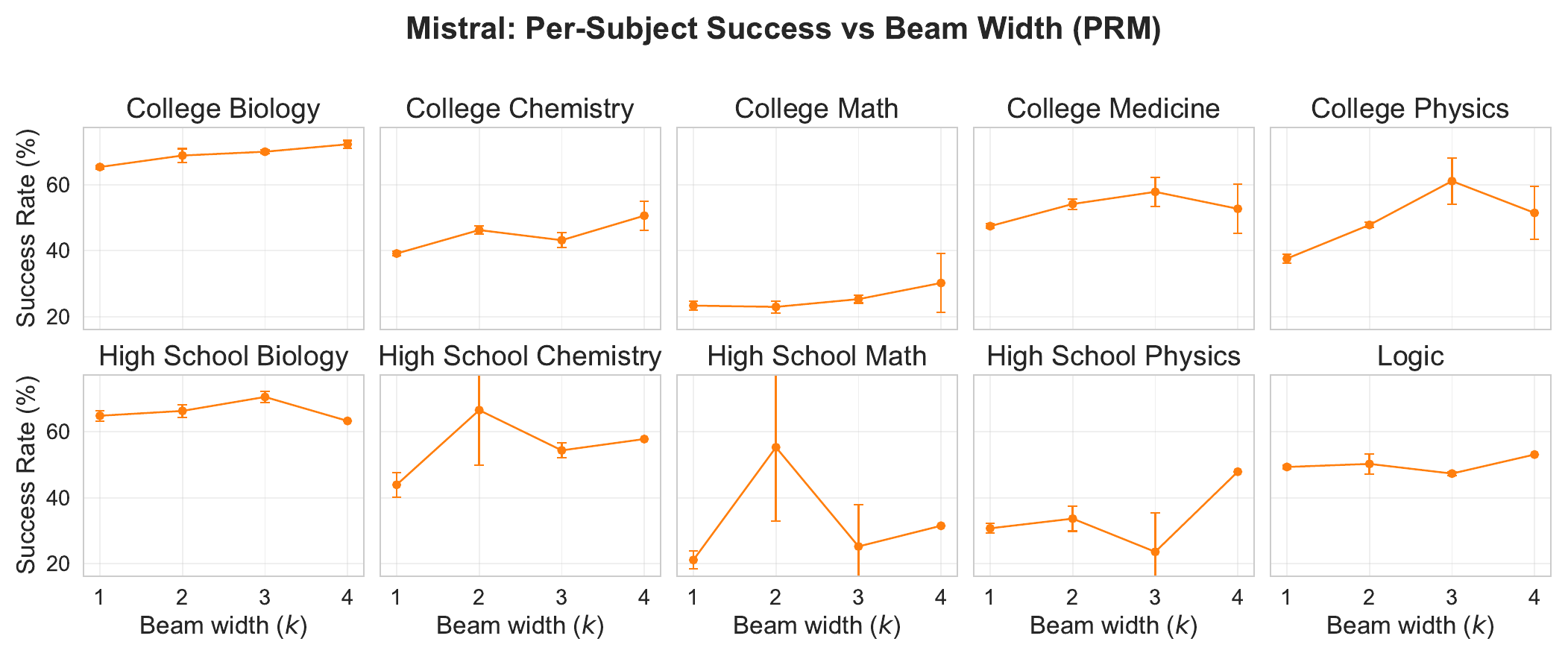}
    \caption{Per-subject success rate vs.\ beam width for \textbf{Mistral-7B-Instruct-v0.3} with PRM scoring.}
    \label{fig:mistral_subjects_prm}
\end{figure}

\section{Implementation Details}
\begin{itemize}
    \item \textbf{Beam Width}: $k \in \{1, 2, 3, 4\}$.
    \item \textbf{Number of Actions}: Equal to beam width.
    \item \textbf{Maximum Depth}: 24 steps.
    \item \textbf{Temperature}: 0.7 for all configurations (including $k$=1).
    \item \textbf{Reward Aggregation}: \texttt{last}: only the most recent step's score is used for beam selection.
    \item \textbf{Perplexity}: Computed over the full sequence (prompt + all steps) at each depth.
    \item \textbf{PRM}: \texttt{math-shepherd-mistral-7b-prm}; uses the correctness probability at the last step tag.
\end{itemize}
All models are loaded from the Hugging Face Hub. Experiments are managed via Weights \& Biases.

\section{Additional Case Study Figures}
\label{app:case_study_figs}

\begin{figure}[H]
    \centering
    \includegraphics[width=0.6\linewidth]{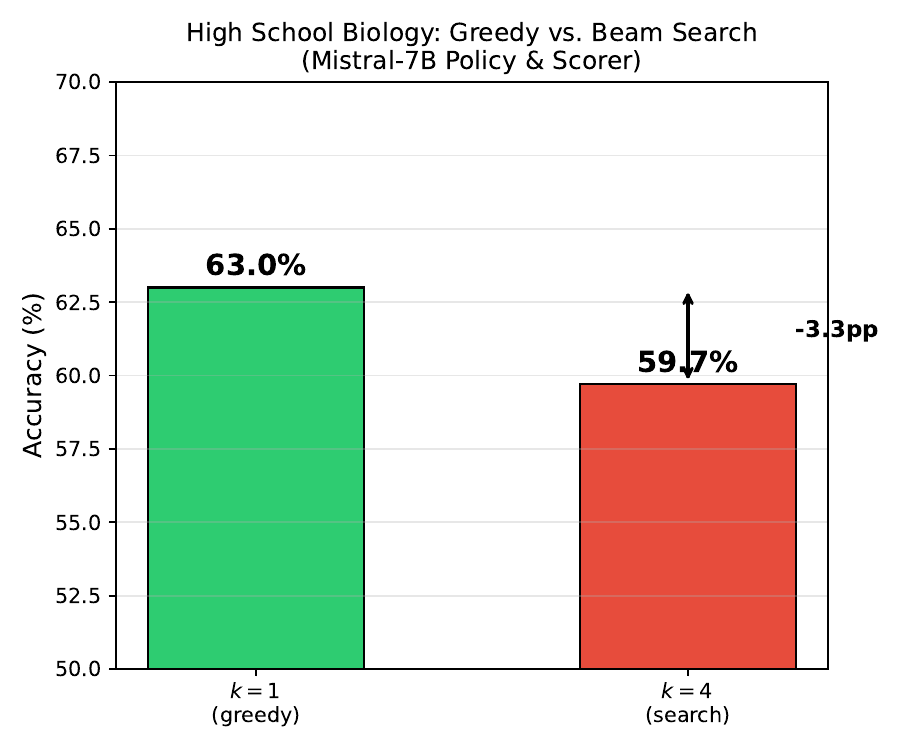}
    \caption{Accuracy: single-sample ($k$=1) vs.\ beam ($k$=4) on \texttt{high\_school\_biology}. Beam search underperforms despite evaluating $k^2 = 16$ candidates per step.}
    \label{fig:case_study_acc}
\end{figure}

\begin{figure}[H]
    \centering
    \includegraphics[width=0.8\linewidth]{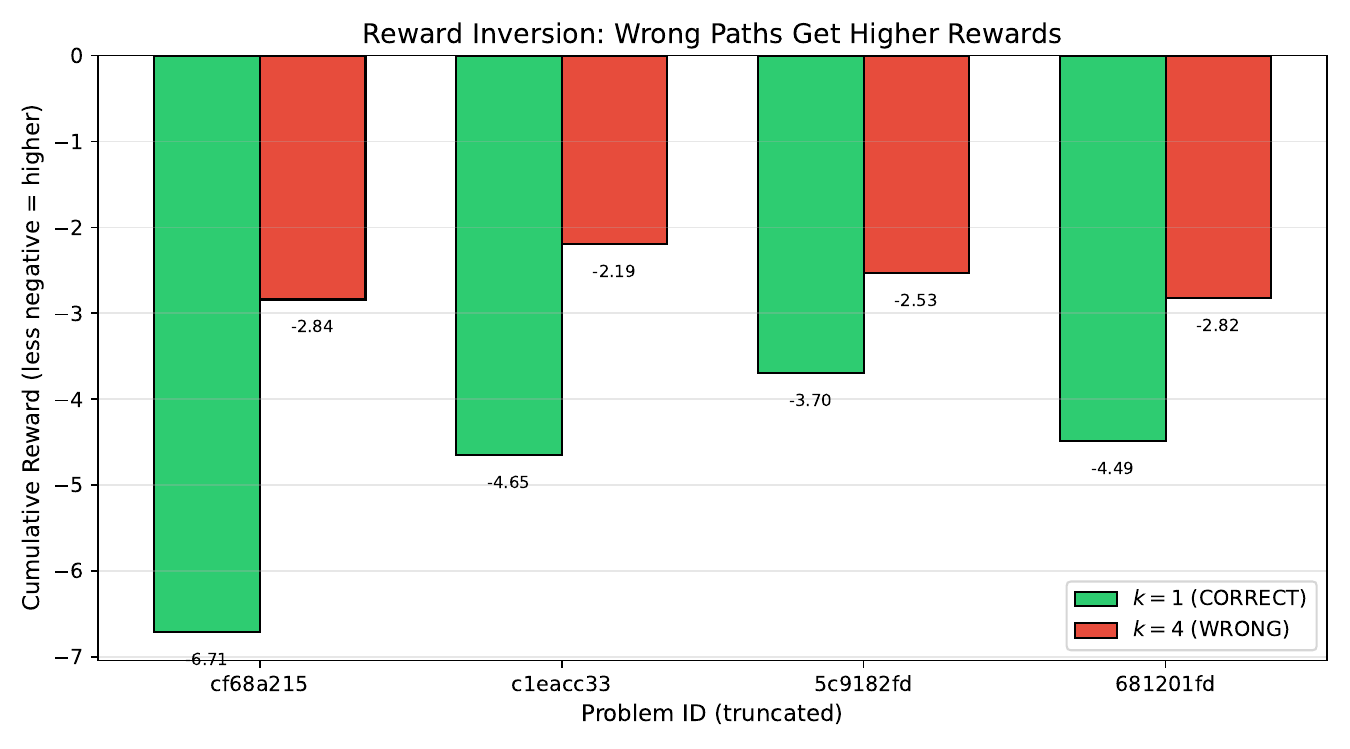}
    \caption{Reward inversion: incorrect paths selected by $k$=4 have higher rewards than correct paths from $k$=1.}
    \label{fig:case_study_inversion}
\end{figure}

\end{document}